\newcommand{\ra}[1]{\renewcommand{\arraystretch}{#1}}
\newtheorem{theorem}{Theorem}
\title{Ramanujan Bipartite Graph Products for Efficient Block Sparse Neural Networks}
\author{%
Dharma Teja Vooturi, Girish Varma, Kishore Kothapalli \\
  Center for Security Theory and Algorithmic Research \\ 
  International Institute of Information Technology Hyderabad, India \\
  \texttt{dharmateja.vooturi@research.iiit.ac.in} \\
}
\begin{document}

\maketitle

\begin{abstract}
    
    Sparse neural networks are shown to give accurate predictions competitive to denser versions, while also minimizing the number of arithmetic operations performed. However current hardware like GPU's can only exploit structured sparsity patterns for better efficiency.  Hence the run time of a sparse neural network may not correspond to the arithmetic operations required. 
    
    In this work, we propose RBGP( Ramanujan Bipartite Graph Product) framework for generating structured multi level block sparse neural networks by using the theory of Graph products. We also propose to use products of Ramanujan graphs which gives the best connectivity for a given level of sparsity. This essentially ensures that the i.) the networks has the structured block sparsity for which runtime efficient algorithms exists ii.) the model gives high prediction accuracy, due to the better expressive power derived from the connectivity of the graph iii.) the graph data structure has a succinct representation that can be stored efficiently in memory. We use our framework to design a specific connectivity pattern called RBGP4 which makes efficient use of the memory hierarchy available on GPU. We benchmark our approach by experimenting on image classification  task over CIFAR dataset using VGG19 and WideResnet-40-4 networks and achieve 5-9x  and 2-5x runtime gains over unstructured and block sparsity patterns respectively, while achieving the same level of accuracy.
    
\end{abstract}

\section{Introduction}

\begin{figure}
    \centering
    \includegraphics[width=0.9\textwidth]{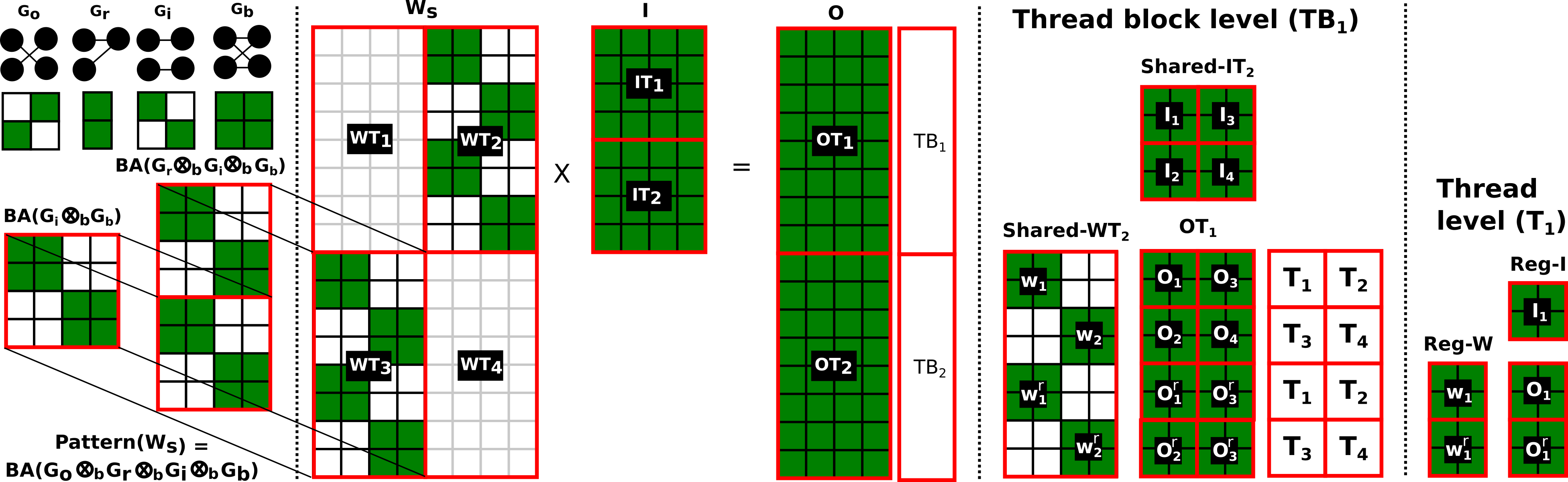}
     \caption{Tiled matrix multiplication of RBGP4 sparse matrix $W_s$ with a dense matrix $I$($O=W_s\times I)$ on GPU. A tile in $O$ ($OT$) is mapped to a thread block $TB$, and each thread in $TB$ is mapped to a 2D strided grid of element blocks in $OT$, where the number of strides, and the size of the element block in row dimension are set to $|G_r.U|$ and $|G_b.U|$ respectively. $OT$ is computed in steps, where in each step, tiles $WT$ and $IT$ are first loaded into shared memory from DRAM, and a thread in $TB$ loads corresponding elements from shared memory to registers before performing the computation. 
     }
\label{fig:rcubsmm_gpu}
\end{figure}

\setlength{\textfloatsep}{7pt}

Sparsity is an essential tool for generating compute and memory efficient neural networks. Despite this, the predominant choice of deep neural networks in production are dense instead of sparse. This is mainly because sparse neural networks tend to have poor runtime performance on the widely used dense AI hardware like GPU/TPU, that are primarily designed for accelerating dense neural networks. So in order to truly uncover the potential of sparsity in production, it is necessary to generate sparse neural networks, that are in harmony with the dense AI hardware.

Pruning \cite{lecun1990optimal,hassibi1994optimal,han2015learning,handeepcompression} is one of the widely used approach for generating sparse neural networks. In element pruning, individual parameters/elements are removed from a pre-trained dense neural network based on some criterion such as magnitude, and then the resultant sparse network is finetuned to recover accuracy. Significant number of parameters can be removed by using element pruning with minimal loss in model accuracy. But the main issue with element pruning is that the generated sparse neural networks have irregular compute and memory access patterns due to unstructured sparsity pattern, and thus cannot be efficiently mapped onto dense AI hardware. Structured pruning methods \cite{li2016pruning,molchanov2016pruning,he2017channel,luo2017thinet,Mao_2017_CVPR_Workshops,Yu_2018_CVPR,cao2019efficient,hipc19} are proposed to improve the runtime performance of sparse neural networks. Unlike element pruning, where parameters are removed at an individual level, in structured pruning, parameters are first divided into structural units like filter, channel, block, multi-block etc and then are removed at a unit level based on the strength of the unit. Structured sparse neural networks have better run-time performance than unstructured sparse neural networks. But this improvement in run-time performance comes at the cost of accuracy due to the imposed structural constraints while removing parameters from a trained model. For example, Mao et al. \cite{Mao_2017_CVPR_Workshops} have shown that for a given amount of pruning, model accuracy decreases and run-time performance increases with increase in coarsity of structural unit from 0D to 3D in pruning 4D weight tensors in convolutional neural networks. This trade-off between run-time and accuracy limits the possibility of generating efficient structured sparse neural networks using structured pruning methods. Structured sparse neural networks can also be generated using structure aware training (STAT) methods \cite{SSL, narang2017block, liu2017learning, Huang2018ECCV, Vooturi_2019_ICCV, kepner2019radix}, where structure is part of the training process. Because the structure is coupled with the training process, STAT methods are better placed than structured pruning methods in generating efficient structured sparse neural networks.

Runtime of a sparse neural network on a given hardware is dependent on the efficiency with which SDMM (Multiplication of a Sparse Matrix with a Dense matrix) operation can be implemented. On a hardware like GPU with memory hierarchy (Registers > Shared memory > L2 cache > DRAM), SDMM operation will have good runtime efficiency if and only if it maximizes data accesses from faster memory through data reuse. And for a structured sparse neural network, the amount of reuse depends on the choice of the structured sparsity pattern. Additionally, the chosen pattern should be well connected to allow for good flow of information in the neural network. In this work, we address these requirements and generate structured sparse networks that are performant and connected. Following are our main contributions:

\begin{itemize}
    \item Proposed RBGP (Ramanujan Bipartite Graph Product) framework for generating structured  sparse neural networks that have multiple levels of block sparsity, good connectivity, and takes less memory for storage.
    
    \item Using RBGP framework, we proposed RBGP4 structured sparsity pattern for the GPU, a representative dense hardware, and achieve good runtime efficiency for the SDMM (Multiplication of a sparse matrix with a dense matrix) operation on GPU.
    
    \item We  demonstrate  the  utility  of  RBGP4  sparsity  pattern  on  image  classification  task over CIFAR dataset and achieve 5-9x and 2-5x runtime gains over unstructured and block sparsity patterns respectively, while achieving the same level of accuracy.
    
\end{itemize}

\section{Related work}
\textbf{Post training:} Generating sparse neural network from a trained dense model dates back to decades old work of Lecun et al. \cite{lecun1990optimal} and Hassibi \& Stork \cite{hassibi1994optimal} where they use second-derivative information to prune weights from a dense model. The idea of pruning was revived by Han et al. \cite{han2015learning,handeepcompression} by simply pruning weights based on their magnitude. To improve runtime performance on dense AI hardware, structured pruning methods \cite{li2016pruning,molchanov2016pruning,he2017channel,luo2017thinet,Mao_2017_CVPR_Workshops,Yu_2018_CVPR,cao2019efficient,hipc19} are proposed with various structured sparsity patterns like filter,channel,block and multi-block.

\textbf{During training:} Sparse neural networks are generated during the training process either by gradually removing the connections or rearranging existing set of connections \cite{srinivas2017training, narang2017exploring, bellec2017deep, mocanu2018scalable, mostafa2019parameter, SNIP, DBLP:journals/corr/abs-1907-04840}. Similarly, structured sparse networks are generated by removing elements at a structural unit level during training. Wen et al. \cite{SSL} used group Lasso regularization to induce channel and filter sparsity in CNNs. Narang et al. \cite{narang2017block} used gradual pruning along with group  Lasso regularization to induce block sparsity pattern in RNNs. In \cite{liu2017learning, Huang2018ECCV, Vooturi_2019_ICCV}, structure is induced  by assigning a learnable parameter for each structural unit and removing them gradually through regularization and pruning.

\textbf{Before training(predefined):} Sparsity can be incorporated apriori to the training process by choosing a mask(choice of connections) in each layer of the sparse neural network and keeping it fixed through out the training. Prior works in predefined approach differ in the way the mask is chosen. Prabhu et al.\cite{Prabhu_2018_ECCV} makes use of expander graphs, and generates a random mask with row uniformity pattern, where all the rows in the mask have equal number of non zeros. Sourya et al.\cite{dey2019pre} generates a random mask with both row and column uniformity. Frankle et al. \cite{frankle2018lottery} uses an unstructured mask generated by pruning a trained dense model. Kepner et al. \cite{kepner2019radix} uses the idea of radix topology to generate a mask with cyclical diagonal pattern. Blocking pattern is the key requirement for achieving runtime performance on dense AI hardware, and none of the above works incorporate block sparsity pattern. In this work, we impose impose block sparsity pattern at multiple levels using RBGP framework, and achieve good runtime performance on GPU, a representative dense AI hardware.

\section{Preliminaries}
\label{sec-preliminaries}
In this section, we setup various definitions and notations used throughout the paper. First we define various types of block sparsity patterns.

\textbf{Block Sparse (BS) matrix:} A BS matrix $W_{bs}$ is a sparse matrix, where non zero elements are structured in the form of blocks of size $(bh,bw)$.  Matrix $W_{bs}$ has  $(W_{bs}.rows/bh \times W_{bs}.columns/bw)$ number of blocks, and a block in $W_{bs}$ is either a zero block with all zeros or a non-zero block with some or all elements as non-zeros. 

\textbf{Uniform Block Sparse  (UBS) matrix:} A UBS matrix $W_{ubs}$ is a block sparse matrix with block size $(bh,bw)$, where all the row/column blocks of size $(bh,W_{ubs}.columns)$/$(W_{ubs}.rows,bw)$ have equal number of non-zero blocks of size $(bh,bw)$. 

\textbf{Cloned Block Sparse (CBS) matrix:} A CBS matrix is a block sparse matrix with block size $(bh,bw)$, where all the non zero blocks of size $(bh,bw)$ have the same non-zero pattern. 

\textbf{Cloned Uniform Block Sparse (CUBS) matrix:} A CUBS matrix is a block sparse matrix with block size $(bh,bw)$ that is both UBS and CBS matrix with block size $(bh,bw)$.

\textbf{Recursive CUBS (RCUBS) matrix:} An RCUBS matrix $W_{s}$ is a sparse matrix with $K$ levels of blocking $B_1, ... B_K$ and following recursion: $W_{s}$ is a CUBS matrix with block size $B_1$, and a non zero block of size $B_{i}$ in $W_{s}$ is again a CUBS matrix with block size $B_{i+1}$. Figure \ref{fig:big_rbgp_example} shows an example of RCUBS matrix with three levels of blocking.

We consider the Bipartite graph $G=(U,V,E)$ representation of matrices (with dimension $|U|\times |V|$). In a biregular bipartite graph, all the vertices in $U$ and $V$ have same degree $d_l$ and $d_r$ respectively. The degree also characterizes the sparsity of such graphs. The eigenvalues of a graph $G$ are the eigenvalues of its adjacency matrix and they characterize many graph properties including connectivity \cite{Chung97}. Bipartite graph with $N$ vertices have  Eigen values $\pm \lambda_1,  ... ,\pm \lambda_{N/2}$, where $\lambda_1 \geq \lambda_2 ... \geq \lambda_{N/2}$. The \emph{spectral gap} between $\lambda_1, \lambda_2$ is a measure of the connectivity properties of the graph \cite{Alon86}. \emph{Ramanujan Graphs} are the graphs with the optimal connectivity (as measured by the spectral gap) for a given level of sparsity \cite{LubotzkyPhSa}.

\textbf{Ramanujan bipartite graph:}    A Ramanujan bipartite graph is a $(d_l,d_r)$-biregular bipartite graph, where the second largest eigenvalue $\lambda_2$ is less than or equal to  $(\sqrt{d_l-1} + \sqrt{d_r-1})$.

\textbf{Bipartite Graph Product $(\otimes_{b})$}:
Bipartite graph product$(G_p = G_1 \otimes_{b} G_2) $ takes two bipartite graphs,  $G_1(U_1,V_1,E_1)$ and $G_2(U_2,V_2,E_2)$ as the input and produces a bigger bipartite graph $G_p(U_p,V_p,E_p)$,  where  $U_p =  U_1\times U_2$, $V_p = V_1 \times V_2$, and $E_p$ is constructed using cross product of edges from $G_1$ and $G_2$ i.e, $ E_p = \{((u_1,u_2),(v_1,v_2)) | ((u_1,v_1) \in E_1 \& (u_2,v_2)) \in E_2 \}$.Bipartite graph product can also be viewed from a matrix viewpoint in the following way: \\ \\
A bipartite graph $G(U,V,E)$ can be represented as a bi-adjacency matrix $BA$ of size $(|U|,|V|)$, with $BA_{uv} = 1 $ if $(u,v) \in E$, and zero otherwise.  For the bipartite graph product$(G_p = G_1 \otimes_{b} G_2)$, bi-adjacency matrix of  $G_p$ is equal to the Tensor product($\otimes$) of the bi-adjacency matrices of the input bipartite graphs $G_1$ and $G_2$ i.e, $BA_p = BA_1 \otimes BA_2$. Figure \ref{fig:bgp_example} shows an example of bipartite graph product both from the viewpoint of both graph and matrix.

\begin{figure}
    \centering
    \includegraphics[width=0.9\textwidth]{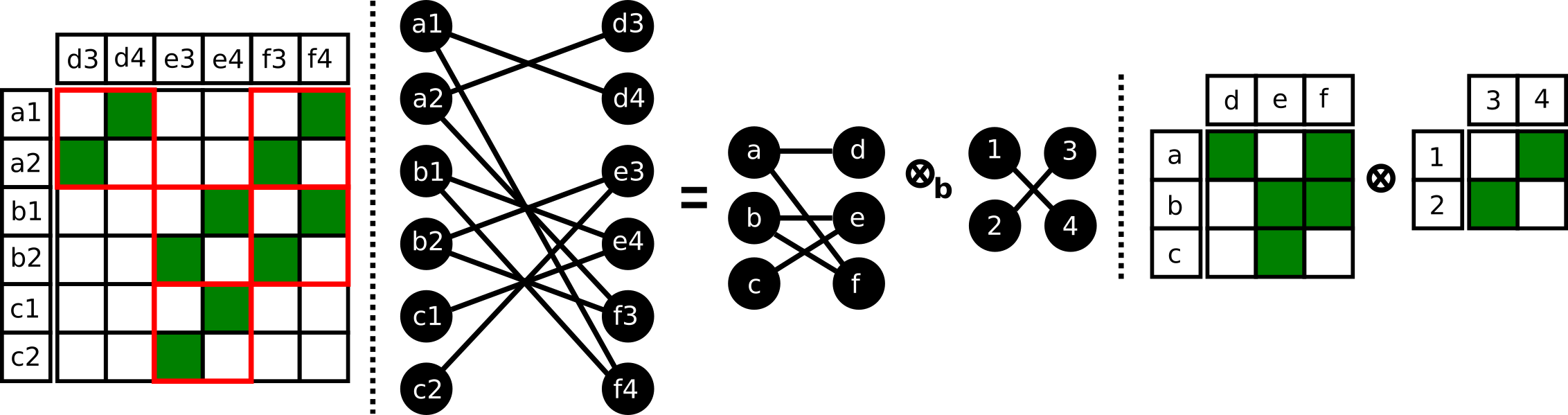}
    \caption{Bipartite graph product operation($\otimes_b$) along with matrix view. Biadjacency matrix of the product graph has CBS(Cloned Block Sparse) pattern with block size (2,2).}
    \label{fig:bgp_example}
\end{figure}

\section{Ramanujan Bipartite Graph Product Framework}
The connectivity between neurons in a layer $L$ of a sparse neural network can be captured using a bipartite graph $G$, where left/right neurons in $L$ corresponds to left/right vertices in $G$, and the connections between left and right neurons in $L$ corresponds to undirected edges between left and right vertices in $G$. The core idea in RBGP (Ramanujan Bipartite Graph Product) framework is to express $G$ as a bipartite graph product of Ramanujan bipartite graphs i.e $(G = G_{1} \otimes_b ... \otimes_b G_{K})$, where $K$ is the number of base graphs. In the rest of the section, we show how expressing connectivity of a layer using bipartite graph products leads to sparse neural networks that have structured sparsity, good connectivity, and memory efficiency.

\paragraph{Structured sparsity.}
In bipartite graph product $(G_p = G_1 \otimes_b G_2)$, the biadjacency matrix of $G_p$ is equal to the Tensor product($\otimes$) of the biadjacency matrices of $G_1$ and $G_2$ i.e, $BA_p = BA_1 \otimes BA_2$. And in Tensor product, $BA_p$ is constructed by replacing each non zero element in $BA_1$ with $BA_2$ matrix, and each zero element in $BA_1$ with zero matrix of size $BA_2$. As $BA_2$ is repeated, $BA_p$ will have CBS (Cloned Block Sparse) sparsity pattern with block size equal to the size of $BA_2$ or $(|G_2.U|,|G_2.V|)$. Figure \ref{fig:bgp_example} shows an example of bipartite graph product, where the biadjacency matrix of the product graph has CBS pattern with block size $(2,2)$. Additionally, when $G_1$ is a biregular bipartite graph, $BA_p$ will have CUBS (Cloned Uniform Block Sparse) sparsity pattern as $BA_1$ will have equal number of elements in all rows, and all columns.
In RBGP framework, the bipartite graph $G$ of a layer $L$ in the neural network is constructed by performing a series of $(K-1)$ bipartite graph products on $K$ base biregular bipartite graphs $(G = G_{1} \otimes_b \cdots \otimes_b G_{K})$ that are Ramanujan. Bipartite graph $G$ can be rewritten as $G = G_1 \otimes_b CG_2$, where $CG_2 = (G_2 \otimes_b \cdots \otimes_b G_{K})$. As $G_1$ is a biregular bipartite graph, $BA$ (biadjacency matrix of $G$) will have CUBS sparsity pattern with block size $ (\pi_{i=2}^{i=K} |G_i.U|, \pi_{i=2}^{i=K} |G_i.V|)$. Going deeper, as $CG_i = (G_i \otimes_b CG_{(i+1)})$, and also as all the base graphs are biregular, $BA$ will have RCUBS (Recursive Cloned Uniform Block Sparse) sparsity pattern with $(K-1)$ blocking levels $B_1 \cdots B_{(K-1)}$, where $B_j = (\pi_{i=j+1}^{i=K} |G_i.U|, \pi_{i=j+1}^{j=K} |G_i.V|)$. Figure \ref{fig:big_rbgp_example} shows an example bipartite graph generated using RBGP framework that uses four base graphs and has three block sizes $(16,16), (8,8),$ and $(2,2)$.

\begin{figure}[h!]
    \centering
    \includegraphics[width=0.9\textwidth]{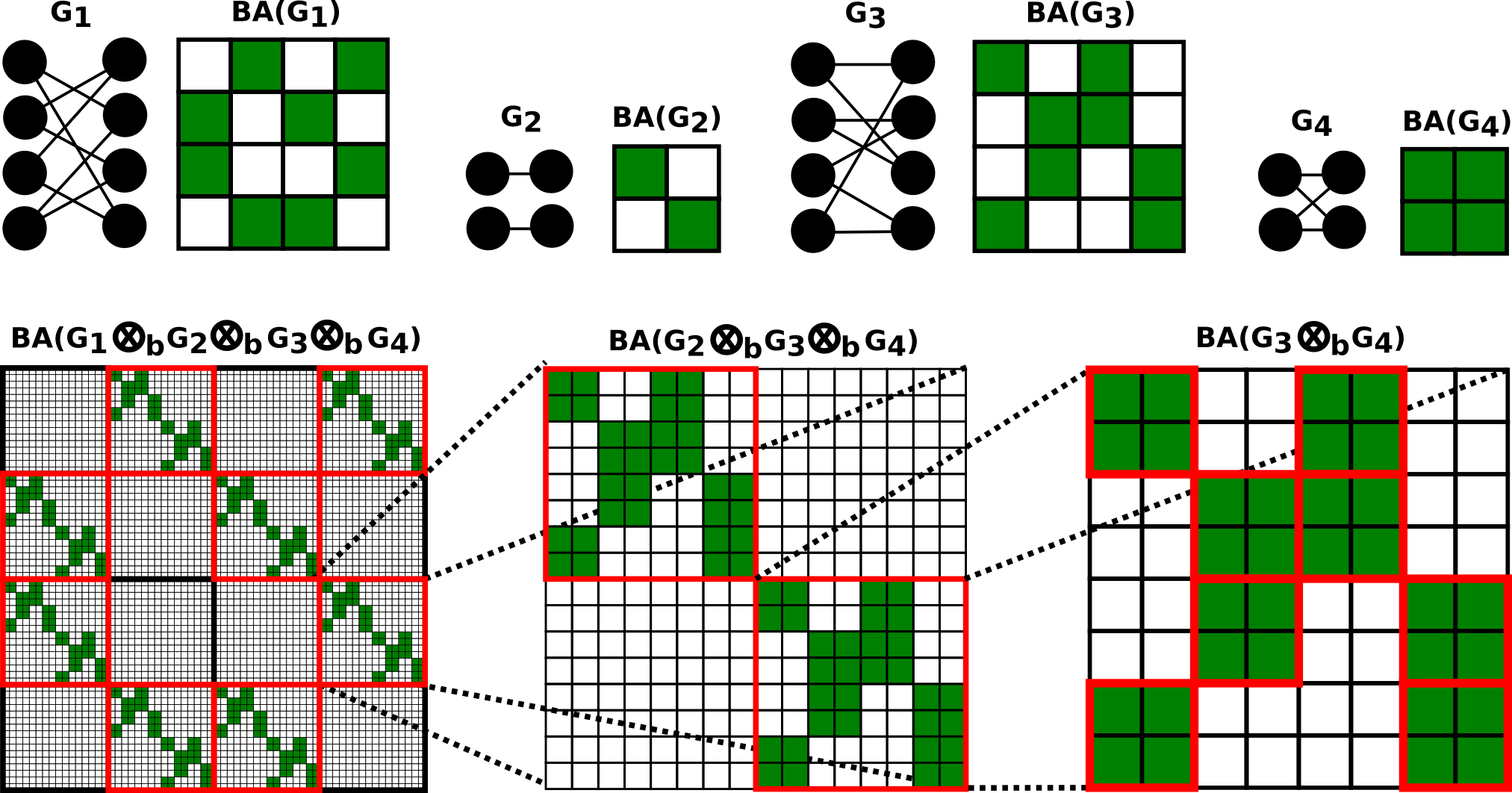}
    \caption{ Biadjacency matrix $BA$ of a bipartie graph generated using RBGP framework. $BA$ has RCUBS(Recursive Cloned Uniform Block Sparse) sparsity pattern with three blocking levels $(16,16),(8,8)$ and $(2,2)$}
    \label{fig:big_rbgp_example}
\end{figure}

\paragraph{Memory efficiency.} 
A sparse neural network can be efficiently stored by only storing the information related to the connections that are present in the sparse layers. For a sparse layer $L$ and it's associated bipartite graph $G$, $|E(G)|$ memory is required for storing the parameters corresponding to connections, and another $|E(G)|$ memory is required for storing connectivity information in the form of adjacency list of $G$. Thus a total of  $2\times |E(G)|$ memory is required for storing the information of a layer in a sparse neural network. But in a RBGP sparse neural network, the memory requirement can be reduced by reducing the memory required for storing connectivity information. In RBGP sparse neural network, as $G$ is constructed using $K$ base bipartite graphs $(G = G_{1} \otimes_b ... \otimes_b G_{K})$, the connectivity information of $G$ can be reduced from $E(G)(\prod_{i=1}^{i=K} |E(G_i)|)$ to  $\sum_{i=1}^{i=K} |E(G_i)|$, by only storing the connectivity information of the individual base graphs. For example, the bipartite graph $G$ generated using RBGP framework in Figure \ref{fig:big_rbgp_example} has 512 edges ($8\times 2\times 8\times 4$), but it only requires storing 22 edges ($8+2+8+4$) from the base graphs to construct the connectivity information of $G$, thus leading to a 23x reduction in memory requirement for  storing the connectivity information when compared to a random bipartite graph with same number of edges as $G$.

\paragraph{Good connectivity.}
Connectivity in a sparse neural network is key for ensuring good flow of information. It is well known \cite{Alon86} that connectivity of the graph is characterized by the \emph{spectral gap} between the largest and second largest eigenvalue (in absolute terms) of the adjacency matrix. In this section, we show that the spectral gap for the block sparse graph we construct using graph products, are optimal for any level of sparsity, for large graphs.

For a $d$-regular bipartite graph the largest eigenvalue in absolute value is $d$ and $-d$. The next largest eigenvalue is considered as the second largest eigenvalue $\lambda_2$. The spectral gap is $d - \lambda_2$ and larger this quantity, the better connected the graph. Suppose the bipartite graph has $n$ vertices on both sides, the degree $d$ is $\alpha n$ where $\alpha$ is the fractional sparsity. For a given value of $d$, the best possible spectral gap of $d-2\sqrt{d-1}$ is achieved by Ramanujan Graphs. We construct block sparse graphs using graph products of smaller Ramanujan Graphs and show below that this construction has similar spectral gap as $n\rightarrow \infty$. For simplicity we consider the case where the bipartite graph $G$ is the graph product of $G_1, G_2$ which are bipartite graphs with $n$ vertices on each sides and degree $d= \alpha n$. Note that $G$ has degree $d^2$ and sparsity $1-(1-\alpha)^2$.
\begin{theorem}
Let $G = G_1 \otimes_b G_2$ where $G_i$ are bipartite graphs with $n$ vertices on each sides and degree $d=\alpha n$. Then for any fixed level of sparsity $\alpha$, 
\begin{equation}\label{eqn:spec_gap}
   \frac{\mbox{IdealSpectralGap}_{d^2}}{\mbox{SpectralGap}(G)} \rightarrow 1 \quad \mbox{ as } \quad n \rightarrow \infty 
\end{equation}
where $\mbox{IdealSpectralGap}_{d^2} = d^2 - 2\sqrt{d^2-1}$ is the best possible spectral gap for $d^2$-regular graphs and $\mbox{SpectralGap}(G)$ is the spectral gap of the block sparse graph  $G$ that we construct.
\end{theorem}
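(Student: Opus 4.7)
The plan is to push everything through the biadjacency matrix. For a bipartite graph $G$ with biadjacency matrix $BA$, the eigenvalues of the adjacency matrix come in pairs $\pm\sigma$ where $\sigma$ runs over singular values of $BA$; in particular $\lambda_1(G)$ and $\lambda_2(G)$ are the top two singular values of $BA$. The bipartite graph product is defined, as recalled in the preliminaries, so that $BA(G_1 \otimes_b G_2) = BA_1 \otimes BA_2$. Starting from SVDs $BA_i = U_i \Sigma_i V_i^T$, the identity $BA_1 \otimes BA_2 = (U_1 \otimes U_2)(\Sigma_1 \otimes \Sigma_2)(V_1 \otimes V_2)^T$ is itself an SVD of $BA_1 \otimes BA_2$, so the singular values of $BA(G)$ are precisely the products $\sigma_i^{(1)} \sigma_j^{(2)}$.

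Next, each $G_i$ is $d$-biregular and Ramanujan, hence $\sigma_1^{(i)} = d$ and $\sigma_2^{(i)} \leq 2\sqrt{d-1}$. Therefore
\[
\lambda_1(G) = d^2, \qquad \lambda_2(G) \leq d \cdot 2\sqrt{d-1} = 2d\sqrt{d-1},
\]
because the next-largest product after $d \cdot d$ is $d \cdot \sigma_2^{(i)}$ (and for $d$ moderately large, $\sigma_2^{(1)}\sigma_2^{(2)} \leq 4(d-1) < 2d\sqrt{d-1}$). This gives the lower bound $\mathrm{SpectralGap}(G) \geq d^2 - 2d\sqrt{d-1}$. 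Combined with the reverse inequality $\mathrm{SpectralGap}(G) \leq \mathrm{IdealSpectralGap}_{d^2}$, which is immediate from the definition of the ideal gap (since $G$ is $d^2$-regular), I obtain the sandwich
\[
1 \;\leq\; \frac{\mathrm{IdealSpectralGap}_{d^2}}{\mathrm{SpectralGap}(G)} \;\leq\; \frac{d^2 - 2\sqrt{d^2 - 1}}{d^2 - 2d\sqrt{d-1}}.
\]

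The final step is a routine asymptotic calculation on the right-hand bound. Dividing numerator and denominator by $d^2$ rewrites it as $(1 - (2/d)\sqrt{1 - 1/d^2})/(1 - 2\sqrt{(d-1)/d^2})$, whose numerator is $1 - O(1/d)$ and whose denominator is $1 - O(1/\sqrt{d})$; both tend to $1$ as $d = \alpha n \to \infty$, so the squeeze forces the ratio in (\ref{eqn:spec_gap}) to $1$.

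The main thing to be careful about is the Kronecker-SVD step: the adjacency matrix of a bipartite graph product is \emph{not} simply the Kronecker product of the two adjacency matrices, so one cannot directly invoke the standard eigenvalue-of-tensor-product identity from (non-bipartite) graph tensor products. Routing through the biadjacency matrix and singular values sidesteps this pitfall cleanly. Beyond that, the proof is just bookkeeping with the Ramanujan bound and a one-variable limit.
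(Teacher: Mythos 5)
Your proposal follows essentially the same route as the paper's proof: both pass to the biadjacency matrix, use multiplicativity of the spectrum under the Kronecker product, apply the Ramanujan bound to get $\lambda_2(G) \le 2d\sqrt{d-1}$, and finish with the identical ratio computation $(d^2 - 2\sqrt{d^2-1})/(d^2 - 2d\sqrt{d-1}) \to 1$ as $d = \alpha n \to \infty$. In fact your write-up is more careful than the paper's in two respects: the paper speaks of ``eigenvalues of the biadjacency matrix,'' which is only meaningful via singular values (your SVD identity $BA_1 \otimes BA_2 = (U_1 \otimes U_2)(\Sigma_1 \otimes \Sigma_2)(V_1 \otimes V_2)^T$ is the right way to make this rigorous), and the paper treats the Ramanujan condition as the equality $\sigma_2 = 2\sqrt{d-1}$, whereas you correctly keep the inequality.

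That said, the left half of your sandwich contains a step that fails as stated. The inequality $\mathrm{SpectralGap}(G) \le \mathrm{IdealSpectralGap}_{d^2}$ is \emph{not} immediate from the definition: the ideal gap $d^2 - 2\sqrt{d^2-1}$ is optimal only in the asymptotic Alon--Boppana sense (degree fixed, number of vertices tending to infinity), and a finite $d^2$-regular bipartite graph can beat it. Concretely, the complete bipartite graph $K_{d^2,d^2}$ has $\lambda_2 = 0$ and spectral gap $d^2 > d^2 - 2\sqrt{d^2-1}$; in your setting, if the base Ramanujan graphs happen to have $\sigma_2$ strictly below $2\sqrt{d-1}$, then $\mathrm{SpectralGap}(G)$ can exceed $\mathrm{IdealSpectralGap}_{d^2}$ and your claimed pointwise bound $1 \le \mathrm{Ideal}/\mathrm{Spec}$ is false. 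The repair is one line: since $\lambda_2(G) \ge 0$, you always have $\mathrm{SpectralGap}(G) \le \lambda_1(G) = d^2$, hence $\mathrm{Ideal}/\mathrm{Spec} \ge (d^2 - 2\sqrt{d^2-1})/d^2 \to 1$, and the squeeze goes through unchanged. (The paper avoids this issue only by silently assuming equality in the Ramanujan bound, so its proof carries the mirror image of the same gap; with the patch above, your inequality-based version is actually the more complete argument.)
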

\begin{proof}
The biadjacency matrix of $G$ is the tensor product of biadjacency matrices of $G_1, G_2$. Hence the eigenvalues of the biadjacency matrix is the product of eigenvalues of biadjacency matrices of $G_1, G_2$. Since $G_1, G_2$ are Ramanujan Graphs, their second largest eigenvalue is $2\sqrt{d-1}$. Hence second largest eigenvalue of $G$ is $\lambda_2(G)=d\times 2\sqrt{d-1}$. The ideal value of second largest eigenvalue for graphs of degree $d^2$ is $2\sqrt{d^2-1}$. Hence Equation \ref{eqn:spec_gap}, becomes
$$ \frac{d^2-2\sqrt{d^2-1}}{d^2-2d\sqrt{d-1}} = \frac{1-2\sqrt{1/d^2-1/d^4}}{1-2\sqrt{1/d-1/d^2}}.$$
Hence for any fixed level of sparsity $\alpha$, $n\rightarrow \infty$ (large matrices), $d \rightarrow \infty$, the LHS of Equation \ref{eqn:spec_gap} $\rightarrow 1$.
\end{proof}

\section{RBGP framework for GPU}
A GPU is fundamentally a many core architecture with thousands of cores, and have multiple memory subsytems(DRAM, L2 cache, L1 cache/shared memory,  and registers) with data access times decreasing in  that order.The reason for  having many memory subsytems is to feed data into cores at a higher rate by avoiding data accesses to slower memory say DRAM, when data is already available on faster memory say L2 cache. On GPU, a computational task can have good runtime efficiency, if it can avoid idling of cores by maximizing memory accesses from faster memories through data reuse. Sparse neural networks with unstructured sparsity pattern offers limited data reuse due to irregular memory access patterns, and thus  has poor runtime performance on GPU. The only way for sparse neural networks to achieve good runtime performance on GPU is by embracing structured sparsity patterns. In this section, using our proposed RBGP framework, we design RBGP4 structured sparsity pattern to effectively use memory subsystems on GPU by facilitating data reuse, and achieve good runtime performance for RBGP4 sparse neural networks.

\paragraph{RBGP4 sparsity pattern.}
In RBGP framework, bipartite graph $G(G = G_{1} \otimes_b ... \otimes_b G_{K})$ corresponding to a layer in the sparse neural network is configured  by the number of base graphs$(K)$, and for each base graph $G_i$, it's
type(sparse  or  complete). RBGP4 sparsity pattern corresponds to a specific configuration, where $G$ is constructed using four base Ramanujan bipartite graphs ($G=G_{o} \otimes_p G_{r} \otimes_p G_{i} \otimes_p G_{b}$), with graphs $G_{o}$ and $G_i$ being sparse, and $G_{r}$ and $G_{b}$ being complete bipartite graphs.  Figure \ref{fig:rcubsmm_gpu} shows an example of RBGP4 sparsity pattern, where $G_o$ and $G_i$ are 50\% sparse, and $G_r$ and $G_b$ are (2,1) and (2,2) complete bipartite graphs respectively.

\paragraph{GPU Implementation.}
Compute in each layer of an RBPG4 sparse neural network is composed of RBGP4MM(Multiplication of a sparse matrix $W_s$ with RBPG4 sparsity pattern, and a dense matrix $I$) operation ($O=W_s\times I$), where $W_s$, $I$, and $O$, corresponds to sparse weight matrix, batched input activations, and batched output activations respectively. We use tiling approach for  efficiently processing RBGP4MM operation. In tiling approach, matrices are divided into tiles, and $OT$(a tile in $O$) is computed in steps, where each step is comprised of matrix multiplication of $WT_s$(a sparse tile in $W$) with $IT$(a dense tile in $I$) i.e, $OT += WT_s \times IT$. For RBGP4MM, we set tile size in $W_s$ is set  to be $(|G_t.U|,|G_t.V|)$, where $G_t = (G_r \otimes_b G_i \otimes_b G_b)$.  On GPU, we associate computation of $OT$ to a thread block, and with in a thread block, each thread maps to a strided 2D grid of element blocks in $OT$, with $|G_r.U|$ number of strides and $|G_b.U|$ element block size in row dimension. We exploit the data reuse offered by  RBPG4 sparsity pattern and make efficient use of memory hierarchy on GPU, by first loading tiles $WT_s$ and $/IT$ into shared memory in each step of $OT$, and each thread loads it's share of data into registers from shared memory before performing the computation. Figure \ref{fig:rcubsmm_gpu} shows an example of using tiling approach for RBGP4MM operation on GPU. A more detailed GPU algorithm can be found in Appendix.

\paragraph{Why RBGP4 ?}
RBGP4 sparsity pattern $(G=G_{o} \otimes_p G_{r} \otimes_p G_{i} \otimes_p G_{b})$ is designed to achieve runtime efficiency for SDMM operation ($O=W_s\times I$) on GPU. Towards that, all the four base graphs $G_o$,$G_r$,$G_i$, and $G_b$ in RBGP4 sparsity pattern have a specific role to play.

The role of $G_o$ is to reduce the number of steps required to process $OT$(a tile in $O$) by inducing sparsity at the tile level in $W_s$. Performing bipartite product to the left of $G_t$ with $G_o$ i.e, $(G=G_o \otimes_b G_t)$ results in block sparsity pattern in $W_s$ with block size $(|G_t.U|,G_t.V|)$. As we set tile size in $W_s$ to be the block size, sparsity is induced at the tile/block level in $W_s$, which inturn reduces the number of steps for processing $CT$ by skipping computation corresponding to zero tiles in $W_s$. For example in Figure \ref{fig:rcubsmm_gpu}, we can see that the number of steps required to compute $OT$ is reduced from two to one, as $W_s$ has only two non zero tiles out of four tiles due to 50\% sparsity in $G_o$.

\par The role of graphs $G_r$ and $G_b$ in RBGP4 sparsity pattern is to maximize data reuse from registers in GPU threads by inducing row repetition in $WT_s$(a tile in $W_s$). In row repetition, rows are divided into groups of equal size, where all the rows in a group have non zeros at the same locations. Having row repetition pattern in $WT_s$ implies that all the rows in a group will have same memory access patterns into $IT$, and thus allows for reuse of data from $WT_s$ and $IT$. Performing bipartite graph product to the left and right of $G_i$ with complete graphs $G_r$ and $G_b$ respectively i.e, $(G_t = G_r \otimes G_i \otimes G_b)$ results in row repetition in $WT_s$ with $|G_i.U|$ groups, and $|G_r.U|\times |G_b.U|$ rows in each group. For example in Figure \ref{fig:rcubsmm_gpu}, we can see that as $G_r$ and $G_b$ are complete bipartite  graphs with $(2,1)$ and $(2,2)$ sizes, the sparsity pattern of $WT_s$, has row repetition pattern with 4 rows. In computation associated with thread $T_1$ in $O$, rows $(1,2,5,6)$ have same non zero pattern in $WT_s$, and this allows us to load two $2\times2$ blocks from $WT_s$ and one $2\times 2$ block from $IT$ into register blocks $RegW$ and $RegI$ respectively and reuse each elements from $RegW$ and $RegI$ for 2 and 4 times respectively. 

The role of $G_i$ in RBGP4 sparsity pattern is to allow $W_s$ to have any level of sparsity even when the tile size in $W_s$ is big. When the tile size in $W_s$ is relatively large when compared to the size of $W_s$, it is not possible to obtain desired level of sparsity if a non zero tile in $W_s$ is dense. For example, if a tile in $W_s$ is of size $(64,64)$, and $W_s$ is of size $(128,64)$, only by allowing tiles in  $W_s$ to be sparse, can sparsity greater than 50\% can be obtained. Bipartite graph $G_t$ corresponds to sparsity pattern of $WT_s$, and in RBGP4 sparsity pattern $G_t=(G_r\otimes_b G_i\otimes_b G_b)$. As $G_r$ and $G_b$ are dense/complete, $G_i$ has to be sparse to achieve a desired level of sparsity in $W_s$.

\begin{table}
\centering
\ra{1.3}
\begin{tabular}{@{}llrrrrcrrrr@{}}\toprule
Sparsity & Pattern  & \multicolumn{4}{c}{VGG19} & \phantom{a}& \multicolumn{4}{c}{WideResnet-40-4} \\
\cmidrule{3-6} \cmidrule{8-11} 
in \% & & CF10 & CF100 & Mem & Time && CF10 & CF100 & Mem & Time  \\ \midrule
 $00.00$ & Dense        & 93.14   & 70.64    & 77.39      & 22     && 95.01   & 77.20     & 34.10      & 40\\   
\cmidrule{1-11}
$50.00$& Unstructured    & 92.67   & 70.31    & 77.39      & 165    && 95.42   & 77.92    & 34.10       & 241  \\
& Block     & 92.45   & 70.75    & 41.12      & 94     && 95.49   & 77.52    & 18.12      & 165       \\
& RBGP4      & 92.58   & 70.48    & 38.76      & 20     && 95.34   & 78.27    & 17.13      & 32     \\
\cmidrule{1-11}
$75.00$& Unstructured    & 91.99   & 69.32    & 38.71      & 86     && 95.10   & 76.89    & 17.05      & 135       \\
& Block     & 91.93   & 68.72    & 20.57      & 48     && 94.92   & 76.50    & 9.07       & 85        \\
& RBGP4      & 91.99   & 68.34    & 19.40      & 13     && 94.72   & 76.80     & 8.57       & 20     \\
\cmidrule{1-11}
$87.50$& Unstructured    & 90.88   & 65.41    & 19.37      & 79    && 94.48   & 75.21    & 8.53       & 102       \\& Block     & 90.62   & 65.37    & 10.30       & 25     && 94.56   & 74.55    & 4.54       & 45        \\
& RBGP4      & 90.48   & 65.39    & 9.72       & 8      && 94.38   & 75.25    & 4.30       & 16      \\
\cmidrule{1-11}
$93.75$& Unstructured    & 90.01   & 62.33    & 9.70        & 50      && 93.57   & 73.09    & 4.27       & 69        \\
& Block     & 89.40    & 62.90     & 5.16       & 14     && 93.55   & 71.86    & 2.27       & 26        \\
& RBGP4      & 89.32   & 62.79    & 4.88       & 6      && 93.53   & 72.44    & 2.16       & 14    \\
\bottomrule
\end{tabular}
\vspace{1em}
\caption{Image classification on CIFAR10 (CF10) and CIFAR100 (CF100) datasets using VGG19 and WideResnet-40-4 networks. Models are trained using predefined approach with unstructured,block, and RBGP4 sparsity patterns. For block pattern, we set block size to be $(4,4)$. Memory (Mem) is given in MB, and time is given in milliseconds for one forward pass in training.}
\label{tab:classification}

\end{table}

\section{Results}
We study the effect of RBGP4 sparsity pattern on model accuracy for the task of image classification and compare with unstructured and block structured sparsity patterns. Further more, we study the effect of changing configuration of base graphs in RBGP4 sparsity pattern on runtime. We perform all our experiments on V100 GPU, where  we benchmark unstructured and block sparsity patterns using cuSparse library, and dense pattern using cuBLAS library from NVIDIA.

\paragraph{Image classification benchmark.} 
 In this benchmark, we perform the image classification task on CIFAR dataset using VGG19\cite{Simonyan15} as adapted by Liu et al. \cite{liu2018rethinking}, and WideResnet-40-4\cite{WRN} networks. To train the models, we use predefined approach, where the mask(choice of connections) is chosen apriori to the training process. As a sparse neural network has less number of parameters,we first train the dense model and guide the sparse neural network using knowledge distillation \cite{kd-hinton}. For all our experiments, we incorporate equal amount of sparsity in all layers, except for the first layer connected to input and the final classifier layer. For the optimizer, we use SGD optimizer with momentum of 0.9 and weight decay  of 1e-4.
VGG19/WideResnet-40-4 model is trained for 160/200 epochs with  batch size of 256/128. Initial learning rate is set to 0.1. For VGG19, learning rate is multiplied by 0.1 at epochs 60,120, and 160. And for WideResnet-40-4, learning rate is multiplied by 0.2 at epochs 60,120, and 160. From Table \ref{tab:classification}, we can see that RBGP4 is as accurate as unstructured and block sparsity  patterns, but takes 2x less memory and is 5-9x faster when compared to unstructured, and is 2-5x faster when compared to block sparsity pattern.

\paragraph{RBGP4 runtime characteristics.}
RBGP4 sparse matrix $W_s$ of a given size and sparsity can be obtained in multiple ways by varying the sizes of base graphs $G_o,G_p,G_i,G_b$, and sparsities of $G_o$ and $G_i$. For example, setting sparsities of $(G_o,G_i)$ to either $(0,75\%)$ or $(50\%,50\%)$ leads to $75\%$ sparsity in $W_s$, and setting sizes of base graphs to either $((8,4),(1,1),(8,4),(1,1))$ or $((8,4),(2,2),(4,2),(1,1))$ leads to $W_s$ of size $(64,16)$. In this section, we study the effect of RBGP4 configuration on runtime of SDMM operation $(O=W_s\times I)$. For  all our experiments, we set sizes of matrices $O$,$W_s$, and $I$ to be 4096x4096.

\textit{Sparsity distribution :} In RBGP4 sparsity pattern, sparsity is solely due to presence of sparse graphs $G_o$ and $G_i$, as $G_r$ and $G_b$ are dense or complete graphs. We run experiments with  75\%,87.5\%, and 93.75\% sparsity amounts distributed between $G_o$ and $G_i$, while keeping sizes of $G_o,G_r,G_i,G_b$ fixed to $(32,128),(4,1),(32,32),(1,1)$. From Table \ref{tab:sparsity_mixture}, we can see that for a given sparsity, as sparsity of $G_o$ increases,the runtime decreases. This is because sparsity in $G_o$ incorporates sparsity at the tile level, and this reduces runtime due to skipping of computation and memory loads associated with zero tiles. For dense case(0\% sparsity), we use cuBLAS library from NVIDIA.

\textit{Row repetition :} In row repetition, matrix $W_s$ can be divided into row groups of equal size, where all the rows in a row group have non zeros exactly  at the same locations. Having row repetitions allows us to effectively reuse data from $I$ as rows have same non zero pattern. $G_r$ and $G_b$ in RBGP4 introduces $|G_r.U|\times |G_b.U|$ amount of row repetition in $W_s$ . We run experiments with 1,2, and 4 repetition amounts, while keeping size of $G_t(G_r\otimes G_i \otimes G_b)$ fixed at (128,32), and sparsity  of $G_o$ at 50\%. From Table \ref{tab:row_repetition}, we can see that increasing the size of $G_r$ or $G_b$ or both leads to improved runtime performance as repetition amount increases.

\begin{table}[h!]
\parbox{.45\linewidth}{
\centering
\ra{1}
\begin{tabular}{rrrl}\toprule
         Sp(G)\% & Sp($G_o$) \% & Sp($G_i$) \% & Time(ms)  \\ \midrule
         0 & 0 & 0 & 11.2 (1x)\\ 
         \cmidrule{1-4}
         75.00 & 0.00 & 75.00 & 5.64 (2x) \\
         & 50.00 & 50.00 & 4.44 (2.5x) \\
        \cmidrule{1-4}
        87.50 & 0.00 & 87.50 & 4.31 (2.6x) \\
         & 50.00 & 75.00 & 2.74 (4.1x) \\
         & 75.00 & 50.00 & 2.29 (4.9x) \\
        \cmidrule{1-4}
        93.75 & 0.00 & 93.75 & 3.76 (3x) \\
         & 50.00 & 87.50 & 1.93 (5.8x) \\
         & 75.00 & 75.00 & 1.44 (7.8x) \\
         & 87.50 & 50.00 & 1.22 (9.2x) \\
        \bottomrule
    \end{tabular}\vspace{1em}
    \caption{Effect of varying sparsities of sparse graphs $G_o$ and $G_i$ in RBGP4 sparsity pattern on runtime.}
    \label{tab:sparsity_mixture}
}
\hfill
\parbox{.45\linewidth}{
\centering
\ra{1.48}
\begin{tabular}{cccrrr}\toprule
        \multicolumn{2}{c}{Sizes} &\phantom{a} & \multicolumn{3}{c}{Time(ms) for Sp(G)\%} \\ 
        \cmidrule{1-2} \cmidrule{4-6}
         $G_r$ & $G_b$ && 75.00 & 87.50 & 93.75  \\ 
         \midrule
        (1,1) & (1,1) && 7.07 & 3.91 & 2.45\\
        (2,1) & (1,1) && 4.89 & 3.02 & 1.97\\
        (4,1) & (1,1) && 4.47 & 2.75 & 1.92\\
        (1,1) & (2,1) && 4.85 & 3.01 & 2.03\\
        (1,1) & (4,1) && 4.47 & 2.84 & 2.02\\
        (2,1) & (2,1) && 4.41 & 2.75 & 1.98\\ \bottomrule
    \end{tabular}\vspace{1em}
    \caption{Effect of varying sizes of complete graphs $G_r$ and $G_b$ in RBGP4 sparsity  pattern on runtime.}
    \label{tab:row_repetition}
}
\end{table}
\vspace{-2em}

\section{Conclusion}
We used ideas from extremal graph theory and combinatorics to make sparse neural networks runtime efficient. Ramanujan graphs  which gives the optimal connectivity for a given level of sparsity are used to model connections in a neural network layer. Furthermore, we obtain structured block sparsity by using products of Ramanujan graphs. We prove that the product graph also has the optimal connectivity for large matrices. For the specific case of GPUs, we describe how the block sparsity can be efficiently implemented in hardware, by exploiting the memory hierarchy through data reuse. Benchmarks of this implementation is shown to give significant runtime improvements. Similar ideas could be used for generating structured sparsity patterns that results in runtime efficient implementations in other hardware as well. For the future work, generating combinatorial structured sparsity patterns like RBGP4 during the training process could lead to more accurate models as structure is induced in a gradual manner.

\pagebreak


\bibliographystyle{splncs04}
\bibliography{main}

\newpage
\section{Appendix}
\subsection{Ramanujan Bipartite Graph Generation}
A construction for Ramanujan Bipartite graph(RBG) was given by Bilu et al. \cite{BL}. The proof that this construction obtains the optimal eigenvalue gap was given by Marcus et al. \cite{MSS}. We use algorithms(graph lifts) derived from these construction to generate Ramanujan Bipartite Graphs for a given sparsity.

\textbf{2-lift operation:} A 2-lift is an operation applied on a graph $G$ to produce a bigger graph $G_L$ that is twice as big as $G$ in both vertices and edges. In the 2-lift operation, a clone graph $G^c$ is first created and the vertex set of $G_L$ is set to be the union of vertex sets of $G$ and $G^c$ i.e, $V(G_L) = V(G) \cup V(G^c) $. The edge set of $G_L$ i.e, $E(G_L)$ is then constructed in the following way:  For an edge $(u,v) \in G$,and it's corresponding clone edge $(u^c,v^c) \in G^c$, either the identity edge pair $\{(u,v),(u^c,v^c)\}$ or the crossover edge pair $\{(u,v^c),(u^c,v)\}$  is chosen at random and added to $E(G_L)$. Figure \ref{fig:2-lift-operation} shows an example of 2-lift operation. 

\begin{figure}[h!]
    \centering
    \includegraphics[width=0.9\textwidth]{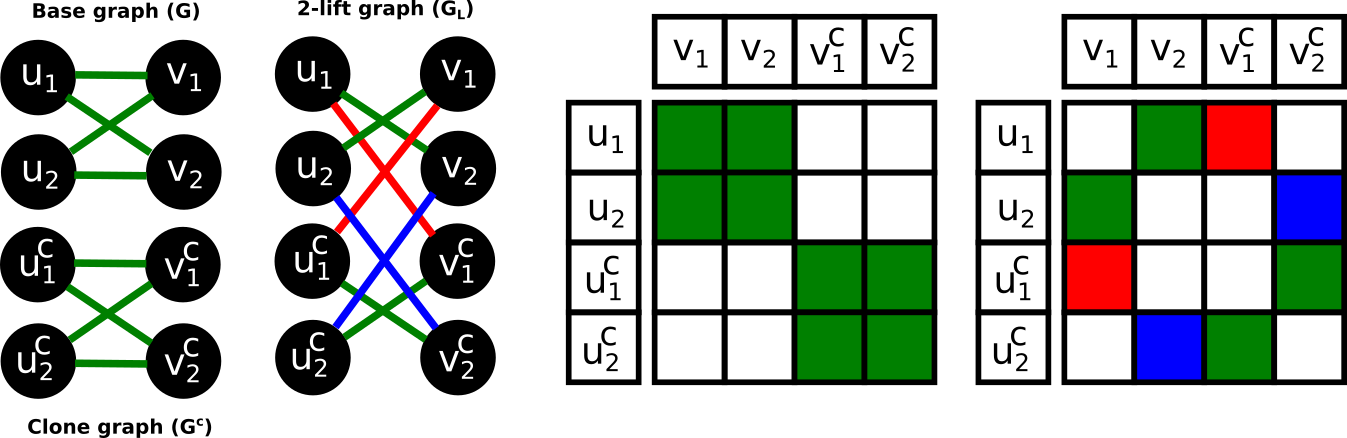}
    \caption{2-lift operation on graph $G$. Clone graph $G^c$ is first created and edges $(u_1,v_1)$ and $(u_2,v_2)$ are randomly chosen to cross over with the corresponding edges $(u_1^c,v_1^c)$ and $(u_2^c,v_2^c)$ respectively in the clone graph.}
    \label{fig:2-lift-operation}
\end{figure}

\textbf{Generating sparse biregular bipartite graph:}A 2-lift operation when applied on a biregular bipartite graph also results in a biregular bipartite graph that is twice as big with same left and right degrees. A biregular graph $G(U,V,E)$ with sparsity($1.0 - |E(G)|/(|G.U|\times |G.V|)$) $sp$, can be generated by repeatedly applying $log_{2}(1/(1-sp))$ 2-lift operations on a complete bipartite graph with $(1-sp)\times |G.U|$ left and $(1-sp)\times |G.V|$ right vertices.

\textbf{Generating RBG graph:} A Ramanujan bipartite graph is first a biregular bipartite graph with an additional constraint on second largest eigenvalue of the adjacency matrix of the graph. To generate an RBG graph, we sample sparse biregular bipartite graphs generated using 2-lift operations until the sampled graph is Ramanujan. We found that an RBG graph with sizes in the order of thousands can be generated in the order of minutes. For a layer in RBGP sparse neural network, the base Ramanujan graphs are generated only once before training and hence sampling approach is not a bottleneck.

\subsection{Pseudo code for RBGP4MM operation on GPU}
Computation in each layer of a sparse neural network is an SDMM(Multiplication of a sparse matrix with a dense matrix) operation $(C=A_s \times B)$. RBGP4MM is an SDMM operation where $A_s$ has RBGP4 sparsity pattern. Algorithm \ref{alg:rbgp4mm-gpu} describes the pseudo code for RBGP4MM operation on a GPU. As RBGP4 sparsity pattern has equal number of non zero elements in each row, non zero elements in $A_s$ can be stored using $data$ arrray of size $(A_s.rows, (1-sp)\times A_s.columns)$, and the index information of $A_s$ is captured by storing adjacency lists of base bipartite graphs.

\begin{algorithm}
\begin{algorithmic}[1]

\Function{LBFM}{$matrix, (bi, bj), (BH,BW)$} \Comment{Load Block From Matrix}
\State $block[BH][BW]$
\For{$i$ in $[0,BH)$}
\For{$j$ in $[0,BW)$}
    \State $block[i][j] = matrix[bi*BH+i][bj*BW+j]$
\EndFor
\EndFor

\State \Return $block$
\EndFunction 
\Statex
\State $G_t = G_r \otimes_b G_i \otimes_b G_b$
\State $TM,TK$ = $|G_t.U|,|G_t.V|$ \Comment{Number of left and right vertices of bipartite graph $G_t$}
\State $RM,RK$ = $|G_r.U|,|G_r.V|$ 
\State $BM,BK$ = $|G_b.U|,|G_b.V|$ 

\State $gridBlockDim = (C.rows/TM, C.cols/TN)$ \Comment{2D grid block}
\State $threadBlockDim = (TM/(RM\times BM), TN/(RN\times BN))$ \Comment{2D thread block}
\For{$(tbm,tbn)$ in $[(0,0):gridBlockDim)$} \Comment{Mapped to thread blocks}
\For{$(thm,thn)$ in $[(0,0):threadBlockDim)$} \Comment{Mapped to threads}
\State $Areg[RM][BM][BK]$ \Comment{Registers}
\State $Breg[RN][BK][BN]$ \Comment{Registers}
\State $Creg[RM][RN][BM][BN]$ \Comment{Registers}
\For{$outk$ in $[0,G_o.d_l)$} \Comment{$G_o.d_l$ is left degree of biregular bipartite graph $G_o$}
\State $oind = G_o.adj\_list[tbm][outk]$
\State $Atile = LBFM(A_s.data,(tbm,outk), (TM,G_t.d_l))$ \Comment{DRAM to shared memory}
\State $Btile = LBFM(B, (oind,tbn), (TK,TN))$ \Comment{DRAM to shared memory(shMem)}
\State $\_\_syncthreads()$
\For{$rk,ink$ in $[0,RK) \times [0,G_i.d_l)$}

\For{$rm$ in $[0:RM)$}
\State $bm$ = $rm*|G_i.U|+thm$
\State $bk$ = $rk\times G_i.d_l + ink$
\State $Areg[rm] = LBFM(Atile, (bm, bk), (BM,BK)) $ \Comment{ShMem to registers}
\EndFor

\For{$rn$ in $[0,RN)$}
\State $bk = rk\times |G_i.V| + G_i.adj\_list[thm][ink]$
\State $bn = rn\times TN/(RN\times BN) + thn$
\State $Breg[rn] = LBFM(Btile, (bk,bn), (BK,BN))$ \Comment{ShMem to registers}
\EndFor

\For{$rm,rn$ in $[0,RM) \times [0,RN)$}
\State $Creg[rm][rn] += Areg[rm] \times Breg[rn]$  \Comment{Computation}
\EndFor
\State $\_\_syncthreads()$

\EndFor
\EndFor
\For{$rm,rn$ in $[0,RM) \times [0,RN)$}
\For{$m,n$ in $[0:BM) \times [0:BN)$}
\State $row = tbm*TM + rm*(TM/RM) + thm*BM + m$
\State $col = tbn*TN + rn*(TN/RN) + thn*BN + n$
\State $C[row][col] += Creg[rm][rn][m][n] $ 
\EndFor
\EndFor

\EndFor
\EndFor
\end{algorithmic}
\caption{GPU algorithm for RBGP4MM$(C=A_s\times B)$ operation using tiling approach. Tile sizes for $A_s$,$B$, and $C$ are chosen to be $(TM,TK)$,$(TK,TN)$, and $(TM,TN)$ respectively. On GPU, each tile in $C$ is mapped to a thread block, and each thread in the thread block is mapped to a group of ($RM\times BM\times RN\times BN$) number of elements in a tile of $C$. Variables TM,TK,RM,RK,BM,BK are set based on RBGP4 configuration($G = G_o \otimes_b G_r \otimes_b G_i \otimes_b G_b)$ of $A_s$.}
\label{alg:rbgp4mm-gpu}
\end{algorithm}

\end{document}